\documentclass[letterpaper, 10 pt, conference]{ieeeconf}  

\IEEEoverridecommandlockouts                              

\usepackage{graphics} 
\usepackage{epsfig} 
\usepackage{amsmath} 
\usepackage{amssymb}  

\usepackage[ruled, linesnumbered]{algorithm2e}

\usepackage{stfloats}
\usepackage{adjustbox}
\newtheorem{lemma}{Lemma} 
 
\usepackage{cite}

\usepackage{hyperref}
\hypersetup{
    colorlinks=true,
    linkcolor=blue,      
    urlcolor=blue
}

\newtheorem{thm}{Theorem}

\newtheorem{prop}[thm]{Proposition}

\usepackage{subfigure}
\usepackage{graphicx}
\usepackage{xcolor}
\usepackage{soul}

\title{\LARGE \bf
Safety-Critical Control under Uncertainty via Adaptive Conformal Quantile Prediction Intervals
}

\author{Hao Zhou$^{1}$, Yanze Zhang$^{1}$, Yiwei Lyu$^{2}$ and Wenhao Luo$^{2}$
\thanks{$^*$This work was supported in part by the U.S. National Science Foundation under Grant 2530297.}
\thanks{$^{1}$The authors are with the Department of Computer Science, University of Illinois Chicago, Chicago, IL, 60607, USA.
Email: {\tt\small \{hzhou134, yzhan361\}@uic.edu}%
}
\thanks{$^{2}$The authors are with the Department of Computer Science and Engineering, Texas A\&M University, College Station, TX, 77843, USA.
Email: {\tt\small \{yiweilyu, wenhaol\}@tamu.edu}%
}
}

\begin{document}

\maketitle
\thispagestyle{empty}
\pagestyle{empty}

\begin{abstract}
Safety-critical control under uncertainty requires uncertainty representations that are both statistically valid (for certifiable performance)  
and compatible with enforceable safety constraints. However, existing methods often assume particular distributions of uncertainty for provable safety guarantees or establish symmetric and input-agnostic prediction intervals for robust safety, which can lead to misaligned or overly conservative safety constraints in control synthesis. In this paper, we introduce a novel safe control framework with adaptive uncertainty quantification that constructs calibrated and state-dependent prediction intervals to enable high-probability safety guarantees, while improving constrained control performance.
The framework leverages adaptive conformal prediction (ACP) and extends it with conformal quantile regression (CQR) to capture distribution-free, asymmetric uncertainty intervals with certifiable probabilistic coverage, and integrates the resulting uncertainty sets into a probabilistic control barrier function formulation to enforce robust safety with reduced conservativeness. This yields uncertainty-aware safe control constraints that can be incorporated within a model predictive control (MPC) framework to provide provably safe behaviors with high probability. Simulation and theoretical results are provided to demonstrate the effectiveness of our approach. 
\end{abstract}

\section{Introduction}\label{sec: introduction}
The robotic systems are increasingly being deployed in the real world, such as autonomous driving \cite{he2022autonomous, lyu2021probabilistic} and quadrotor control~\cite{wang2017safe, khan2020barrier, zhou2026certifiable}. In those scenarios, the robot needs to interact with the complex environment while avoiding unsafe behaviors. However, ubiquitous uncertainties in real world conditions, such as friction variations, actuator errors, and state-estimation noise, introduce stochastic disturbances that perturb the system dynamics and are difficult to capture accurately using first-principles models due to their complexity and time-varying nature. As a result, guaranteeing safety under such uncertainty remains a central challenge for controller synthesis in safety-critical robotic control.

To handle safety constraints in the presence of uncertainty,
work in \cite{kothare1996robust} assumes bounded uncertainty, where the bound is heuristically chosen and sufficiently large to ensure safety. However, such a design could lead to overly conservative control actions and therefore reduced task efficiency. Moreover, predefined bounded uncertainty may not match up with realistic conditions, such as unbounded uncertainty (e.g., Gaussian distributions) or time-varying uncertainty. To account for more realistic uncertainty models, 
safe control under uncertainty is typically formulated as a Model Predictive Control (MPC) with chance constraints given the distribution of the uncertainty as a prior~\cite{blackmore2010probabilistic, blackmore2011chance, ono2008iterative, zhu2019chance}. For example, work in~\cite{blackmore2011chance, ono2008iterative} assume linear dynamics and linear chance constraints, which enable efficient uncertainty propagation and analytical quantification under Gaussian noises. However, the assumption of Gaussian noises could fail to capture uncertainties in the real world (e.g., time-varying uncertainty).  Sampling-based methods, such as the probabilistic particle\cite{blackmore2010probabilistic}, were proposed to quantify the uncertainty of the chance constraint without depending on a particular distribution and the robot system. However, these methods usually require extensive computation that is intractable on resource-limited hardware to obtain a precise uncertainty quantification. 

Recently, Conformal Prediction (CP)~\cite{lei2015distribution, papadopoulos2002inductive}, as a statistical method, provides an efficient distribution-free approach for uncertainty quantification and has gained increasing attention in robotics for quantifying the uncertainty as chance constraints~\cite{cleaveland2024conformal, lindemann2023safe, yang2023safe, zhou2024safety, zhou2025computationally}. However, the exchangeability assumption limits its applicability in online control settings, where the data distribution can evolve over time, especially when the robot is interacting with a changing environment. Adaptive Conformal Prediction (ACP)~\cite{gibbs2021adaptive, gibbs2024conformal} addresses this by adaptively adjusting the uncertainty quantification online, thereby improving robustness when the distribution shifts and providing asymptotic coverage guarantees. Building on that, work in \cite{dixit2023adaptive, zhou2024safety} combined the ACP with the MPC to achieve safe control without relying on assumptions about uncertainty distributions. However, both \cite{zhou2024safety} and \cite{dixit2023adaptive} adopt the symmetric residual score function to quantify the uncertainty, which uses a single global error scale for all inputs and potentially renders a wider confidence interval, often causing overly conservative control behaviors.

To address these challenges, this paper proposes a novel uncertainty-aware safe control framework that adaptively quantifies uncertainty through calibrated asymmetric confidence intervals in an online, distribution-free manner. 
These intervals establish unequal deviations around the prediction values that preserve high-probability safety guarantees when integrated into control constraints, while enjoying reduced conservativeness.

The \textbf{contributions} are threefold: 
\begin{itemize}
    \item We couple the Conformalized Quantile Regression (CQR) with Adaptive Conformal Prediction (ACP) to quantify the uncertainty of safe constraints under unknown motion noise, which yields adaptive and certifiable prediction intervals for control synthesis.    
    
    \item We integrate the uncertainty-aware safety constraint into a probabilistic extension of discrete-time control barrier functions (DT-CBF) and present a model predictive control (MPC) that leverages adaptively quantified uncertainty to generate control actions with high-probability safety guarantees and improved control efficiency.
        
    \item Theoretical analysis and simulation results are provided to justify the enforced high-probability safety and improved task efficiency with our proposed method. 
\end{itemize}

\section{Preliminaries}\label{Sec: prelim and prob}

\subsection{Discrete-Time Control Barrier Functions}  \label{sec: MPC-CBF}
Consider the nonlinear discrete-time control system,
\begin{equation}\label{eq:dt dynamics}
    x_{k+1}=f(x_k, u_k)
\end{equation}
where $x_k \in \mathbb{R}^n$ and $u_k \in \mathbb{R}^m$ represent the system state and control input at the time step $k$, respectively. $f: \mathbb{R}^n \times \mathbb{R}^m \mapsto \mathbb{R}^n$ is a continuous function.

Given the system dynamics in Eq.~\eqref{eq:dt dynamics} and the safe set $\mathcal{C} \subset \mathbb{R}^n$, $\mathcal{C} \triangleq \{x_k\in \mathbb{R}^n | h(x_k) \geqslant 0\}$ where $h: \mathbb{R}^n \mapsto \mathbb{R}$ is a continuous function, the \textbf{discrete-time control barrier function} (DT-CBF)\cite{agrawal2017discrete} has been used to enforce the system state remains in the safe set $\mathcal{C}$ if its initial state satisfies $x_0 \in \mathcal{C}$. We summarize it as the following lemma. 

\begin{lemma}\label{lem:dt-cbf}[Discrete-time Control Barrier Functions (summarized from~\cite{zeng2021safety})]
Given the control-affine system defined in Eq.~\eqref{eq:dt dynamics} and the safe set $\mathcal{C}$, the admissible control space for any Lipschitz continuous controller
$u_k \in \mathbb{R}^m$ 
at the time step $k$ rendering $\mathcal{C}$ forward invariant is defined as below, 
\begin{align}\label{eq:sampledata_cbc_lemma}
    \mathcal{H}(x_k) = \{u_k \in \mathbb{R}^{m} | h(f(x_k, u_k)) - h(x_k) \geqslant -\gamma h(x_k) \} \notag
\end{align}
where $\gamma \in (0,1]$ is a user-defined parameter.
\end{lemma}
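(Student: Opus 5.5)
The plan is to show forward invariance of $\mathcal{C}$ by induction on the time step $k$. The induction uses only the inequality defining $\mathcal{H}(x_k)$ together with the range of $\gamma$. The continuity of $f$ and the Lipschitz property of the controller matter only for well-posedness, namely that the closed-loop trajectory $x_{k+1}=f(x_k,u_k)$ is defined at every step. They do not enter the invariance argument itself.

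First, rearrange the constraint. If $u_k \in \mathcal{H}(x_k)$, then
\begin{equation*}
h(x_{k+1}) = h(f(x_k,u_k)) \geqslant (1-\gamma)\, h(x_k).
\end{equation*}
Since $\gamma \in (0,1]$, we have $1-\gamma \in [0,1)$. Therefore $h(x_k)\geqslant 0$ implies $h(x_{k+1}) \geqslant 0$, which means $x_{k+1}\in\mathcal{C}$ whenever $x_k\in\mathcal{C}$.

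Second, carry out the induction. The base case is $x_0\in\mathcal{C}$, i.e.\ $h(x_0)\geqslant 0$. The inductive step is exactly the one-step implication above. Unrolling the recursion also gives the quantitative bound
\begin{equation*}
h(x_k)\geqslant (1-\gamma)^k h(x_0)\geqslant 0 \quad \text{for all } k.
\end{equation*}
So every trajectory that starts in $\mathcal{C}$ and applies inputs $u_k\in\mathcal{H}(x_k)$ at each step remains in $\mathcal{C}$ for all time. This is precisely the claimed forward invariance.

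The main subtlety is not the inequality but what the statement implicitly assumes. The lemma presupposes that $\mathcal{H}(x_k)$ is nonempty at every visited state, i.e.\ that $h$ really is a DT-CBF for the system. Without this, no admissible input exists and the induction cannot proceed. I would state this assumption explicitly, following \cite{zeng2021safety}, rather than try to prove it, since it depends on $f$ and $h$.

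I would also note that the interval matters. If $\gamma$ were allowed to exceed $1$, the factor $1-\gamma$ would be negative, and $h(x_k)>0$ would no longer force $h(x_{k+1})\geqslant 0$. This is exactly why the lemma restricts $\gamma$ to $(0,1]$.
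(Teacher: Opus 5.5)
The paper gives no proof of this lemma (it is summarized from \cite{zeng2021safety}), and your argument is exactly the standard one behind that citation and is correct: you rewrite the constraint as $h(x_{k+1})\geqslant(1-\gamma)h(x_k)$ with $1-\gamma\in[0,1)$, induct to get $h(x_k)\geqslant(1-\gamma)^k h(x_0)\geqslant 0$, and state the nonemptiness of $\mathcal{H}(x_k)$ along the trajectory as an explicit hypothesis. Two minor precision points on your side remarks: in discrete time the continuity and Lipschitz hypotheses are not needed even for well-posedness, since $x_{k+1}=f(x_k,u_k)$ is defined for any map $f$ and any input sequence; and only the upper bound $\gamma\leqslant 1$ is needed for invariance, because $\gamma\leqslant 0$ would also preserve $h\geqslant 0$, so the restriction $\gamma>0$ concerns conservatism and feasibility rather than invariance.
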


\subsection{Chance-Constrained Safety}
Lemma.~\ref{lem:dt-cbf} establishes the condition for rendering the safe set forward invariant under a deterministic discrete-time system as in Eq.~\eqref{eq:dt dynamics}.
In practice, however, robotic systems are often affected by stochastic disturbances, in which the deterministic DT-CBF condition can no longer be enforced exactly as the next state is no longer deterministic. As a result, guaranteeing strict safety becomes challenging, especially when the disturbance is governed by an unknown distribution. Consider the nonlinear discrete-time control system under the disturbance,
\begin{equation}\label{eq: stochastic dynamics}
    \hat{x}_{k+1}=f(\hat{x}_k, u_k) + \epsilon_k
\end{equation}
where $\hat{x}_{k} \in \mathbb{R}^n$ represents the system state, and $u_k \in \mathbb{R}^m$ is the control input. $\epsilon_k \in \mathbb{R}^n$ is the process noise. We make no parametric assumption on the distribution of $\epsilon_k$ following the more realistic setting.

Under such stochastic disturbances, ensuring the safety condition holds deterministically often leads to overly restrictive behavior or even infeasibility. Instead, we adopt a chance-constrained safety formulation that enforces safety with a prescribed probability level. In this case, given a user-defined failure probability $\alpha \in (0, 1)$ and considering $\hat{x}_k$ as a random variable, we have:
\begin{equation}\label{eq: prob h}
    \mathrm{Pr}(h(\hat{x}_{k+1}) \geqslant 0) \geqslant 1-\alpha.
\end{equation}
where $\mathrm{Pr}(\cdot)$ is the probability under the event $(\cdot)$.

Given the stochastic dynamics in Eq.(\ref{eq: stochastic dynamics}), Lemma.~\ref{lem:dt-cbf} and Lemma 2 in work \cite{zhou2024safety}, the probabilistic safety constraint in Eq.~\eqref{eq: prob h} can be satisfied via the following chance constraints over control $u_k$:
\begin{equation}\label{eq: B2h}
       \mathrm{Pr}(u_k \in {\mathcal{H}}(\hat{x}_{k})) \geqslant 1-\alpha
\end{equation} 

\subsection{Adaptive Conformal Prediction}\label{sec: ACP with CQR}
\textbf{Conformal Prediction} (CP) \cite{papadopoulos2002inductive}, as a statistical method to formulate a certified region $R(X^{*})$ with guaranteed probability, is used to cover the ground-truth model prediction given any predictive model $F: X \mapsto Y$.
$F$ is trained from dataset $\mathcal{D}=\{(X_k, Y_k)\}$ where $X_k \in \mathbb{R}^n$, $Y_k \in \mathbb{R}$, and $k=1,\dots, N$. Given the model input $X^*$, this probability for the true value $Y^*$ regarding the certified region $R(X^{*})$ can be formulated as,
\begin{equation}\label{eq: cp covarage pr}
	\mathrm{Pr}( Y^{*} \in R(X^{*};F, S^{(r)}) \geqslant 1-\alpha
\end{equation}

where $\alpha \in (0,1)$ is the user-defined failure probability and $Y^{*} \in \mathbb{R}$ is the ground-truth label. $R(X^{*};F, S^{(r)})$ is constructed from the prediction model $F$ and the $1-\alpha$ quantile number $S^{(r)}\in\mathbb{R^{+}}$.where $S^{(r)}$ obtained from the calibration dataset  
$\Bar{\mathcal{D}} =\{(\Bar{X}_k, \Bar{Y}_k)\}_{k=1}^{\Bar{N}}$ plays a key role in establishing the coverage guarantee for the ground-truth output $Y^{*}$. Specifically, $S^{(r)}$ is obtained from a conformal score set $\mathcal{S}$, where each conformal score $S_k$ measures the absolute deviation between the ground truth $\Bar{Y}_k$ and the prediction $F(\Bar{X}_k)$
\begin{equation}\label{eq: rs score prelim}
    S_{k} = |\Bar{Y}_k -  F(\Bar{X}_k)|
\end{equation}

The conformal scores in $\mathcal{S}=\{S_k\}_{k=1}^{\Bar{N}}$ are then sorted in non-decreasing order. Let $S^{(r)}$ denote the $r$-th order statistic, where $r:= \lceil (\overline{N}+1)(1-\alpha) \rceil$ and $\lceil \cdot \rceil$ denotes the ceiling function. Given a new input $X^{*}$, the certified region is defined as $R(X^{*})=[F(X^{*})-S^{(r)}, F(X^{*})+S^{(r)}]$. Under the standard exchangeability assumption between the calibration samples and the new sample $(X^*, Y^*)$, the conformal prediction region satisfies the marginal coverage\cite{papadopoulos2002inductive},
\begin{equation}\label{eq: preli-cp-prob}
	\mathrm{Pr}(F(X^{*})-S^{(r)} \leqslant Y^{*} \leqslant F(X^{*})+S^{(r)}) \geqslant 1-\alpha
\end{equation}

$S^{(r)}$ determines the size of the prediction region $R(X^*)$. $R(X^*)$ has subsequently been adopted in \cite{dixit2023adaptive, zhou2024safety, zhou2025computationally} for obtaining the uncertainty-aware safety constraint. However, the prediction region obtained from Eq.~\eqref{eq: rs score prelim} yields a symmetric confidence interval around the nominal prediction $F(X^*)$, which may introduce unnecessary conservatism and consequently compromise control performance. To address this limitation, this paper proposes a novel method in section \ref{sec: method} that constructs an asymmetric confidence interval by separately characterizing the upper and lower uncertainties, thereby reducing conservatism and improving control efficiency.

\textbf{Adaptive Conformal Prediction.}
To guarantee the probability of the prediction value inside a valid region $R(X^{*})$ under the distribution-shift scenarios, the adaptive conformal prediction (ACP) \cite{gibbs2021adaptive} is proposed to re-estimate the quantile number $S^{(r)}$ online. The main idea is to utilize the learning rate $\eta$  to adjust the failure probability $\alpha_k \in (0,1)$ over time, which can be formulated as
	\begin{equation}
		\label{eq: ACP update}
		\alpha_{k+1} = \alpha_{k} + \eta (\alpha-e_k) \; \mathrm{with}\; e_k=
		\begin{aligned}
			\begin{cases}
				0, \mathrm{if}\; Y_k \in R(X_k;F,S^{(r)}_k), \\
				1, \text{otherwise}.
			\end{cases}
		\end{aligned}
	\end{equation}
It is noted that if $Y_k \in R(X_k;F,S^{(r)})$, then $\alpha_{k+1}$ will increase to undercover the region induced by $S_k^{(r)}$. 

\subsection{Problem Statement} 

Consider a robot system governed by stochastic dynamics in Eq.~\eqref{eq: stochastic dynamics}, our goal is to achieve the safety-critical control under the distribution-free assumption about the stochasticity. To this end, we pose the following MPC problem with chance constraints:
\begin{subequations}\label{eq: MPC-CBF-prob}
\begin{align}
    \quad &\min_{{u}_{k:k+H-1 | k}}\,\, \mathbb{E}[J(\hat{x}_{k:k+H|k},  {u}_{k:k+H-1|k})] \label{eq: E_J} \\ 
        s.t.& \quad
            \hat{x}_{k+\tau+1|k} = f(\hat{x}_{k+\tau|k}, {u}_{k+\tau|k})+\epsilon_{k+\tau|k}, \label{eq:dynamics} \\
            & \quad \mathrm{Pr}(u_{k+\tau|k} \in \mathcal{H}) \geqslant 1-\alpha_{k+\tau|k}, \label{eq: chance constraint}\\
            & \quad \hat{x}_{k|k} \sim \mathbb{P}(\bar{x}_{k|k}) \\
            & \quad {u}_{k+\tau|k} \in [u_{\mathrm{min}}, u_{\mathrm{max}}],\quad\forall \tau=0,\ldots,H-1 
\end{align}
\end{subequations}
where the prediction horizon is denoted by $H$. $\hat{x}_{k+\tau|k}$ denotes the predicted state at horizon step $k+\tau$ given time step $k$, obtained by applying the control input sequence ${u}_{k:k+H-1|k}$ to the system dynamics in Eq.~\eqref{eq:dynamics}. The term $\epsilon_{k+\tau|k}$ represents the process noise, and $\mathbb{P}(\bar{x}_{k|k})$ denotes the unknown distribution associated with the nominal state $\bar{x}_{k|k}$. 
The cost function is defined as $J(\cdot):=l_T(\hat{x}_{k+H|k}) +\sum^{H-1}_{\tau=0} l_s(\hat{x}_{k+\tau|k}, {u}_{k+\tau|k})$ where $l_T(\hat{x}_{k+H|k}), l_s(\hat{x}_{k+\tau|k}, {u}_{k+\tau|k})$ are the terminal cost and stage cost, respectively. $\mathbb{E}[\cdot]$ represents the expectation of the cost function $J(\cdot)$. The chance constraint $\mathrm{Pr}(u_{k+\tau|k} \in \mathcal{H}) \geqslant 1-\alpha_{k+\tau|k}$ arises from the stochasticity of $\hat{x}_{k+\tau|k}$ induced by $\epsilon_{k+\tau|k}$. $u_{\mathrm{min}}$ and $u_{\mathrm{max}}$ specify the control bounds.

Addressing the optimization problem in Eq. (\ref{eq: MPC-CBF-prob}) is particularly challenging in the absence of an explicit distributional assumption about $\epsilon$. Previous work, such as \cite{blackmore2011chance, zhu2019chance}, has relied on the Gaussian distribution to characterize uncertainty in chance constraints, which may violate the real uncertainty distribution or lead to conservative control. In this paper, we focus on solving the following questions,
\begin{itemize}
    \item Q1: How to quantify the uncertainty of the chance constraint under the DT-CBF (Eq.\eqref{eq: chance constraint}) without depending on the particular distribution assumption? 
    \item Q2: With uncertainty quantification, how to obtain a tight (effective) confidence interval that benefits the downstream decision-making (Eq.\eqref{eq: MPC-CBF-prob})?
    \item Q3: How to integrate the effective confidence interval into the MPC problem?
\end{itemize}

Q1 and Q2 will be addressed in section \ref{sec: DT-CBF CQR} while Q3 will be tackled in section \ref{sec: algorithms and analysis}.

\section{Method}\label{sec: method}
\subsection{Conformalized Quantile Regression Score for Uncertainty Quantification} \label{sec: DT-CBF CQR}
The CQR score function, along with the ACP, has been shown to adaptively cover the prediction within the certified region $R$ with a probability guarantee \cite{gibbs2021adaptive, angelopoulos2024theoretical}. We summarize it as follows.

\begin{lemma}\label{lem: CQR}[Summarized from~\cite{gibbs2021adaptive, angelopoulos2024theoretical}] 
Given the failure probability $\alpha \in (0,1)$, the lower quantile estimator $F_l:\mathbb{R}^n\mapsto \mathbb{R}$ and upper quantile estimator $F_u:\mathbb{R}^n\mapsto \mathbb{R}$ from the training dataset $\mathcal{D}$ where $l=\alpha/2$ and $u=1-\alpha/2$, define the conformalized quantile regression score function $S_k$ as, 
\begin{equation}\label{eq: CQR score}
    S_k(\Bar{X}_k, \Bar{Y}_k)=\max(F_l(\bar{X}_k)-\bar{Y}_k, \bar{Y}_k-F_u(\bar{X}_k))
\end{equation}
where the paired $(\bar{X}_k, \bar{Y}_k)$ is adopted from the calibration dataset $\bar{\mathcal{D}}$ (validation dataset). Given the input $X^*$, the following probability holds,
\begin{equation}\label{eq: method-cp-prob}
	\mathrm{Pr}(F_l(X^{*})-S^{(r)} \leqslant Y^{*} \leqslant F_u(X^{*})+S^{(r)}) \geqslant 1-\alpha
\end{equation}
where $Y^*$ is the ground-truth of the predictions while $S^{(r)}$ is the quantile number obtained from the conformal score set $\mathcal{S}$ whose element is constructed by the CQR score $S_k$.
\end{lemma}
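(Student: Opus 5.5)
We prove the statement with the standard split conformal argument, using the CQR score in place of the absolute residual of Eq.~\eqref{eq: rs score prelim}. The first step is to show that the score encodes membership in the interval. For any pair $(X,Y)$ with score $S(X,Y)=\max(F_l(X)-Y,\,Y-F_u(X))$, and any threshold $q\in\mathbb{R}$,
\begin{equation*}
S(X,Y)\leqslant q \iff F_l(X)-Y\leqslant q \ \text{and}\ Y-F_u(X)\leqslant q \iff F_l(X)-q\leqslant Y\leqslant F_u(X)+q .
\end{equation*}
This is immediate, because a maximum is at most $q$ exactly when both of its arguments are. Taking $q=S^{(r)}$, the event in Eq.~\eqref{eq: method-cp-prob} is therefore exactly the event $\{S^*\leqslant S^{(r)}\}$, where $S^*:=S(X^*,Y^*)$ is the score of the test point.

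The second step is the rank argument. The estimators $F_l,F_u$ are fit on the training set $\mathcal{D}$, which is independent of the calibration set $\bar{\mathcal{D}}$ and the test point. Conditional on $\mathcal{D}$, the score map is therefore a fixed measurable function. Under the standing exchangeability assumption on $(\bar X_1,\bar Y_1),\dots,(\bar X_{\bar N},\bar Y_{\bar N}),(X^*,Y^*)$, the scores $S_1,\dots,S_{\bar N},S^*$ are then exchangeable.

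It follows that the rank of $S^*$ among these $\bar N+1$ values is uniform on $\{1,\dots,\bar N+1\}$, with ties broken uniformly at random; ties only make the inequality below more favorable. Hence
\begin{equation*}
\mathrm{Pr}\big(S^*\leqslant S^{(r)}\big)\geqslant \frac{r}{\bar N+1}\geqslant 1-\alpha,
\end{equation*}
using $r=\lceil(\bar N+1)(1-\alpha)\rceil$. We assume $r\leqslant\bar N$; otherwise we set $S^{(r)}=+\infty$ and the bound holds trivially. Integrating over $\mathcal{D}$ removes the conditioning, and combining with the first step yields Eq.~\eqref{eq: method-cp-prob}.

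The main obstacle is the \emph{adaptive} reading of the statement, since the lemma is cited from \cite{gibbs2021adaptive}. In that online setting exchangeability fails, and $S^{(r)}$ is replaced by the quantile $S^{(r_k)}_k$ at level $1-\alpha_k$, with $\alpha_k$ updated by Eq.~\eqref{eq: ACP update}. There one does not obtain a per-step probability bound. Instead we would prove the long-run guarantee
\begin{equation*}
\Big|\frac1T\sum_{k=1}^T e_k-\alpha\Big|\leqslant\frac{\max(\alpha_1,1-\alpha_1)+\eta}{\eta T}.
\end{equation*}

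The argument proceeds in two parts. First, show by induction that $\alpha_k\in[-\eta,1+\eta]$ for all $k$:
\begin{itemize}
\item if $\alpha_k<0$, the quantile is $+\infty$, so $e_k=0$ and $\alpha_k$ increases;
\item if $\alpha_k>1$, the quantile is $-\infty$ (the interval is empty), so $e_k=1$ and $\alpha_k$ decreases.
\end{itemize}
Second, telescope the update: $\alpha_{T+1}-\alpha_1=\eta\sum_{k=1}^{T}(\alpha-e_k)$. Dividing by $\eta T$ and bounding $|\alpha_{T+1}-\alpha_1|$ with the first part gives the displayed bound.

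The equivalence from the first step carries over unchanged to each time step. So $e_k=0$ holds exactly when $Y_k$ lies in the CQR interval, and the time-averaged version of Eq.~\eqref{eq: method-cp-prob} follows.
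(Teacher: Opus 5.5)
The paper never proves this lemma. It is imported as a black box from \cite{gibbs2021adaptive, angelopoulos2024theoretical}, and the only related argument is the proof of Proposition~\ref{prop: CQR prob guarantee}, which quotes Proposition~4.1 of \cite{gibbs2021adaptive} without derivation. Your proposal is therefore a self-contained reconstruction of the cited results, and it is correct. The max-score equivalence plus the exchangeability rank argument is the standard split-conformal proof. Ties do only help, since $\{S^*\leqslant S^{(r)}\}$ is exactly the event that $S^*$ is at most the $r$-th smallest of all $\bar N+1$ scores. Compared with the bare citation, your argument buys two things. First, it makes explicit the hypotheses the statement omits: exchangeability of the calibration pairs with $(X^*,Y^*)$, and independence of $F_l,F_u$ from them. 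Without these, \eqref{eq: method-cp-prob} cannot hold in general. Second, it shows that validity never uses that $F_l,F_u$ estimate the $\alpha/2$ and $1-\alpha/2$ quantiles; those levels only govern the interval's width, i.e.\ the reduced conservativeness the paper is after. You also correctly separate the adaptive reading. There no per-step bound like \eqref{eq: method-cp-prob} is available, and only the long-run frequency bound survives. Your boundedness-plus-telescoping argument is precisely what underlies Proposition~4.1 of \cite{gibbs2021adaptive}; you should add the trivial case $\alpha_k\in[0,1]$, where one update moves $\alpha_k$ by at most $\eta$. In the paper's indexing your argument gives $-\frac{1-\alpha_0+\eta}{K\eta}\leqslant\frac{1}{K}\sum_{k=0}^{K-1}e_k-\alpha\leqslant\frac{\alpha_0+\eta}{K\eta}$, which is the correct form of \eqref{eq: gibbs prop}. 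The paper's display swaps the two bounds and drops the minus sign. Its coverage lower bound $1-\alpha-p_l$ still agrees with yours, but the upper bound in Proposition~\ref{prop: CQR prob guarantee} should read $1-\alpha+p_u$.
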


Different from the residual score (Eq.\eqref{eq: rs score prelim}) defined in section \ref{Sec: prelim and prob}, the confidence interval in Eq.\eqref{eq: method-cp-prob} obtained by the CQR score function (Eq.\eqref{eq: CQR score}) is a nonparametric prediction interval due to the lower quantile estimator $F_l$ and upper quantile estimator $F_u$, which can be adaptively adjusted via the input. While the confidence interval in Eq.\eqref{eq: preli-cp-prob} from the residual score function is globally defined under the calibration dataset $\bar{\mathcal{D}}$, the confidence interval from Eq.\eqref{eq: preli-cp-prob} must be wide enough to almost cover the data distribution everywhere involved with those high variance regions under small $\alpha$, which potentially overestimates the interval under those low-variance regions.

To solve the MPC problem in Eq. (\ref{eq: MPC-CBF-prob}), the challenge is to translate the implicit chance constraint in Eq.~\eqref{eq: chance constraint} under a distribution-free setting to an explicit uncertainty-aware constraint. Common methods assume a particular distribution~\cite{zhu2019chance, blackmore2011chance} for quantifying uncertainty, which can be invalid or can lead to overly conservative control under the distribution-free assumption. In this section, we propose effectively quantifying the uncertainty of the chance constraint derived from the DT-CBF using adaptive conformal prediction with the CQR score function that can self-tighten the confidence interval.

By substituting the stochastic dynamics in Eq.~\eqref{eq: stochastic dynamics} into the chance constraint in Eq.~\eqref{eq: chance constraint}, the DT-CBF-based chance constraint in Eq.~\eqref{eq: B2h} could be written as
\begin{equation}\label{eq: prob DT-CBF}
    \mathrm{Pr}(h(f(\hat{x}_k, u_k) + \epsilon_k) - h(\hat{x}_k) \geqslant -\gamma h(\hat{x}_k)) \geqslant 1-\alpha
\end{equation}
where $\alpha \in (0,1)$ denotes the failure probability specified by the user and $\gamma \in (0, 1]$ is defined in the DT-CBF.

To quantify the uncertainty of Eq.~\eqref{eq: prob DT-CBF}, Eq.~\eqref{eq: prob DT-CBF} will be considered as a confidence interval estimation problem using CQR with ACP. And hence the prediction model $F(\bar{x}_k, u_k):\mathbb{R}^n\times \mathbb{R}^m \mapsto \mathbb{R}$ for the confidence interval is defined as below, 
\begin{equation}
    F(\bar{x}_k, u_k)\triangleq h(f(\bar{x}_k, u_k)) - (1-\gamma)h(\bar{x}_k)
\end{equation}

where $\bar{x}_k$ is the nominal state from the deterministic dynamics with the control input $u_k$ from the MPC. Let 
\begin{equation}\label{eq: cbf truth}
    F^{*}(\hat{x}_k, u_k)\triangleq h(f(\hat{x}_k, u_k)+\epsilon_k) - (1-\gamma)h(\hat{x}_k)
\end{equation}
 represent the ground truth model. Since CQR requires the upper and lower quantiles to construct a confidence interval, using the quantiles of $F$ may lead to bias relative to the actual safety barrier certificates from CBF. Instead of using the quantile of $F$ for constructing the confidence interval (Eq.\eqref{eq: method-cp-prob}), we introduce a new definition of the prediction model $\hat{F}(\bar{x}_k, u_k):\mathbb{R}^n\times \mathbb{R}^m \mapsto \mathbb{R}$.
 \begin{equation}
      \hat{F}(\bar{x}_k, u_k;d)\triangleq F(\bar{x}_k, u_k) + d(\bar{x}_k)
 \end{equation}

 where $d(\bar{x}_k)=|F^{*}(\hat{x}_k, u_k) - F(\bar{x}_k, u_k)|$. Hence, the lower and upper quantile models is parameterized by $d_{\alpha_l}$ and $d_{\alpha_u}$, respectively. Given the definition of $\hat{F}(\bar{x}_k, u_k; d)$, the CQR score function followed by Eq.\eqref{eq: CQR score}  is defined as,
\begin{equation}\label{eq: cbf cqr score}
\begin{aligned}
    S_k = \max(\hat{F}_{\alpha_l}(\bar{x}_k, u_k; d_{\alpha_l})-F^{*}(\hat{x}, u_k), \\
    F^{*}(\hat{x}_, u_k) - \hat{F}_{\alpha_u}(\bar{x}_k, u_k;d_{\alpha_u}))
\end{aligned}
\end{equation}

where $\alpha_l=\alpha/2$ and $\alpha_u=1-\alpha/2$. $d_{\alpha_l}$ and $d_{\alpha_u}$ are the lower and upper quantiles respectively. At time step $k$, $\bar{x}_k$ and $d(\bar{x}_k)$ are aggregated into the training dataset, which is then used to train $d_{\alpha_l}$ and $d_{\alpha_u}$ via conformalized quantile regression. $\hat{F}_{\alpha_l}$ and $\hat{F}_{\alpha_u}$ are defined as
\begin{equation}
    \hat{F}_{\alpha_l}(\bar{x}_k, u_k; d_{\alpha_l})=F(\bar{x}_k, u_k)+d_{\alpha_l}
\end{equation}
\begin{equation}
    \hat{F}_{\alpha_u}(\bar{x}_k, u_k; d_{\alpha_u})=F(\bar{x}_k, u_k)+d_{\alpha_u}
\end{equation}

Given the CQR score function $S_k$, the conformal score set $\mathcal{S}$ and the quantile number $S^{(r)}$ can both be obtained. Hence, the self-tightening confidence interval under the CQR score function is
 \begin{equation}\label{eq: bound cqr}
      R(\bar{x}_k,u_k;S^{(r)})=[-S^{(r)}+\hat{F}_{\alpha_l}, \hat{F}_{\alpha_u}+S^{(r)}]
 \end{equation}

Unlike the residual score function (Eq.\eqref{eq: preli-cp-prob}), which produces fixed-width confidence intervals given the calibration dataset, the CQR score yields adaptive, self-tightening intervals by leveraging a nonparametric construction based on lower and upper quantiles. Benefiting from the nonparametric property, the confidence interval for deriving the uncertainty-aware constraint can be tightened.

Given the confidence interval of the safety barrier certificates(Eq.\eqref{eq: bound cqr}), we let the lower bound of Eq.\eqref{eq: bound cqr} greater than zero, which will lead to the following uncertainty-aware constraint.

Suppose there exists a failure probability $\alpha \in (0, 1]$. Given the system dynamics in Eq.~\eqref{eq: stochastic dynamics}, the quantile number $S^{(r)}$ under the CQR score function $S_k$ where $r:= \lceil (k+1)(1-\alpha_k) \rceil$ with $\alpha_k$ updated via Eq.\eqref{eq: ACP update}, and the prediction model $\hat{F}(\bar{x}_k, u_k; d)$, then quantifying Eq.~\eqref{eq: prob DT-CBF} with $1-\alpha$ probability guarantee leads to the deterministic constraint,
\begin{equation} \label{eq: quantified CBF}
    -S^{(r)} + \hat{F}_{\alpha_l}(\bar{x}_k, u_k; d_{\alpha_l}) \geqslant 0
\end{equation}

Eq.~\eqref{eq: quantified CBF}
translates the chance constraint (Eq.~\eqref{eq: chance constraint}) into a deterministic constraint using the self-tightening confidence interval with the CQR score function. However, this section does not address how to obtain the confidence interval under the MPC framework or how to embed the uncertainty-aware constraint into the MPC, which will be addressed in Section~\ref{sec: algorithms and analysis}, respectively.

\subsection{Safe Control Design and Analysis}\label{sec: algorithms and analysis}
Given uncertainty-aware constraint (Eq.\eqref{eq: quantified CBF}), the MPC optimization problem in Eq.\eqref{eq: MPC-CBF-prob} becomes,
\begin{subequations}\label{eq: MPC-CBF-ACP-OPT}
\begin{align}
    \quad &\min_{{u}_{k:k+H-1 | k}}\,\, J(\bar{x}_{k:k+H|k}, u_{k:k+H-1|k}) \\
        s.t.& \quad
            \bar{x}_{k+\tau+1|k} = f(\bar{x}_{k+\tau|k}, {u}_{k+\tau|k}), \label{eq: dynamics} \\
            & \quad \hat{F}_{\alpha_l}(\bar{x}_{k+\tau|k}, u_{k+\tau|k};d_{\alpha_l}) - S^{(r)}_{k+\tau|k} \geqslant 0 ,  \label{eq: probability-2}\\
            &\quad \bar{x}_{k|k}=\hat{x}_{k|k}  \\
            & \quad {u}_{k+\tau|k} \in [u_{min}, u_{max}],\quad\forall \tau=0,\ldots,H-1 
\end{align}
\end{subequations}
where $\bar{x}_{(\cdot)}$ is the nominal state. Eq.~\eqref{eq: MPC-CBF-ACP-OPT} formulates an online decision-making problem, where the exchangeability assumption can be violated due to the sequential and adaptive nature of the data. To address this issue, ACP is employed to adaptively update the confidence interval. As defined in Eq.~\eqref{eq: ACP update}, updating the failure probability $\alpha_k$ requires access to the ground-truth function $F^{*}$. However, the future ground truth $F^{*}$ within the MPC prediction horizon is unavailable, making it impossible to update $\alpha_k$ in a distribution-free manner based on future observations. Following \cite{dixit2023adaptive}, we will use the time-lagged conformal score computation for CQR score $S_k$, which is presented in Fig.\ref{Fig: cqr score function}, respectively.

\begin{figure}[t]	
	\centering	
	\includegraphics[scale=1, width=0.95\linewidth]{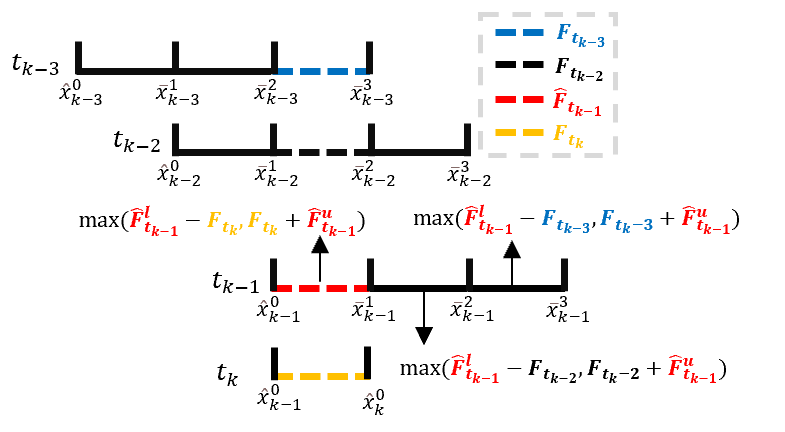}	
	\caption{CQR score computation under MPC. Considering the prediction horizon $H=3$ at time step $t_{k-1}$. Let $F_{t_{k-1-\tau}}=F(\bar{x}_{k-1|k-1-\tau}, u_{k-1|k-1-\tau})$, $\hat{F}^l_{t_{k-1}}=\hat{F}_{\alpha_l}(\hat{x}_{k-1|k-1}, u_{k-1|k-1}; d_{\alpha_l})$, and $\hat{F}^u_{t_{k-1}}=\hat{F}_{\alpha_u}(\hat{x}_{k-1|k-1}, u_{k-1|k-1}; d_{\alpha_u})$. The time-lagged CQR score is defined as $S_{k-1+\tau|k-1} = \mathrm{max}(\hat{F}^l_{t_{k-1}}-F_{t_{k-1-\tau}}, F_{t_{k-1-\tau}}+\hat{F}^u_{t_{k-1}})$ for $\tau = 0, 1, \dots, H-1$. The resulting error terms $S_{k-1+\tau|k-1}$ are then added into the conformal score set $\mathcal{S}$.
 }
	\label{Fig: cqr score function}
\end{figure}

\begin{algorithm}[t]
    \caption{MPC-CBF-ACP-CQR (MCA-CQR)}\label{alg: MCAOPT}
    \KwIn{Parameters: ACP failure probability $\alpha$ and learning rate $\eta$, prediction horizon $H$, $\gamma$ in CBF, total time steps $K$, time step $t_s$, initial safe state $x_0$ and goal state $x_\mathrm{g}$}
    \KwOut{Safe path of the robot }
    \BlankLine
    
    Initialization: $\hat{x}_0=x_0$, \: $S^{(r)}_{k+\tau|k}=0, \tau = 0,\dots, H$\;
    \While{$k \leqslant K$}{
    $\bar{x}_{k+\tau|k},u_k \gets \mathrm{MPC}(\hat{x}_k, S^{(r)}_{k+\tau|k}) \: \tau = 0,\dots, H$ \;
            $\hat{x}_{k+1} \gets f(\hat{x}_k,u_k) + \epsilon_k$ \;
    \ForEach{$\tau < H$}{
            $F(\bar{x}_{k+\tau|k}, u_{k+\tau|k})\gets h(f(\bar{x}_{k+\tau|k}, u_{k+\tau|k})) - (1-\gamma)h(\bar{x}_{k+\tau|k})$\;
     
            $\hat{F}_{\alpha_l} \gets F(\bar{x}_{k+\tau|k}, u_{k+\tau|k})+d_{\alpha_l}$\;
            
            $\hat{F}_{\alpha_u} \gets F(\bar{x}_{k+\tau|k}, u_{k+\tau|k})+d_{\alpha_u}$ \;

            $S_{k+\tau|k} \gets Eq.\eqref{eq: cbf cqr score}$ \;

            $\mathcal{S} \gets S_{k+\tau|k}$ \;
            
            $S^{(r)}_{k+\tau|k} \gets (1-\alpha_{k+\tau|k})$ quantile of $\mathcal{S}$\;

            $\alpha_{k+1+\tau|k+1} \gets Eq.\eqref{eq: ACP update}$ \;
            }
        $d(\bar{x}_k) \gets$ time-lagged residual along the MPC\;
        update $d_{\alpha_l}$ and $d_{\alpha_u}$ using quantile regression \;  
    }
\end{algorithm}

The algorithm for computing the quantile number under ACP to obtain an uncertainty-aware constraint is presented in Algorithm \ref{alg: MCAOPT}. How to integrate the uncertainty-aware constraint into MPC is still presented in Algorithm \ref{alg: MCAOPT}. 

Given the self-tightening confidence interval under the CQR score function, the following proposition states that the proposed algorithm can guarantee the safety constraint within a predefined probability.

\begin{prop}\label{prop: CQR prob guarantee}
     Given the learning rate $\eta$, the initial failure probability $\alpha_0 \in (0,1)$, and the total control time steps $K$, the probability of the onestep-ahead safe constraint will converge to $1-\alpha-p_l$,
    \begin{equation}
        \frac{1}{K}\sum_{k=0}^{K-1} \mathrm{Pr}(F^{*}\in R(\hat{F}, S^{(r)})) \geqslant 1-\alpha-p_l
    \end{equation}
    where $p_l=(\alpha_0 + \eta) / (K\eta)$ is a constant related to the learning rate $\eta$ and the total time step $K$. When $K \rightarrow \infty$, $1-\alpha-p_l$ will converge to $1-\alpha$.
\end{prop}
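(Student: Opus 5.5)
The plan is to follow the standard long-run coverage argument of adaptive conformal prediction~\cite{gibbs2021adaptive}, applied to the sequence $\alpha_k$ generated by Eq.~\eqref{eq: ACP update} with the CQR score of Eq.~\eqref{eq: cbf cqr score}. The first step is to show that the iterates stay bounded: $\alpha_k \in [-\eta,\, 1+\eta]$ for all $k$, and in fact $\alpha_k \geqslant -\eta$ is the side we need.

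The boundedness follows by induction using the structure of the quantile $S^{(r)}$.
\begin{itemize}
\item If $\alpha_k \leqslant 0$, then $r=\lceil (k+1)(1-\alpha_k)\rceil$ exceeds the number of available scores. Following the usual convention we set $S^{(r)}=+\infty$, so the interval $R$ in Eq.~\eqref{eq: bound cqr} is the whole real line. Hence $e_k=0$ and $\alpha_{k+1}=\alpha_k+\eta\alpha>\alpha_k$.
\item If $\alpha_k \geqslant 1$, then $S^{(r)}$ is at most the smallest score (or $-\infty$), which forces $e_k=1$ and $\alpha_{k+1}=\alpha_k-\eta(1-\alpha)<\alpha_k$.
\end{itemize}
Since each step changes $\alpha_k$ by at most $\eta$, the sequence can never leave $[-\eta,1+\eta]$ once started at $\alpha_0\in(0,1)$.

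Next, telescope the update. Summing Eq.~\eqref{eq: ACP update} over $k=0,\dots,K-1$ gives
\begin{equation*}
\alpha_K-\alpha_0=\eta\sum_{k=0}^{K-1}(\alpha-e_k),
\end{equation*}
so that
\begin{equation*}
\frac{1}{K}\sum_{k=0}^{K-1} e_k=\alpha+\frac{\alpha_0-\alpha_K}{K\eta}\leqslant \alpha+\frac{\alpha_0+\eta}{K\eta}=\alpha+p_l .
\end{equation*}
This bound holds deterministically, for every realization of the noise $\epsilon_k$. Taking expectations and using $\mathbb{E}[1-e_k]=\mathrm{Pr}(F^{*}\in R(\hat{F},S^{(r)}))$ yields the averaged coverage bound $1-\alpha-p_l$. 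Since $p_l=O(1/K)$, it vanishes as $K\to\infty$.

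The main obstacle is justifying that the event $e_k=0$ used in the ACP update is the same event whose probability appears in the statement. Two points need checking. First, the CQR score in Eq.~\eqref{eq: cbf cqr score} must satisfy $S_k\leqslant S^{(r)} \iff F^{*}\in[\hat{F}_{\alpha_l}-S^{(r)},\,\hat{F}_{\alpha_u}+S^{(r)}]$, which holds directly from the max structure. Second, the time-lagged scores from Fig.~\ref{Fig: cqr score function} and the online retraining of $d_{\alpha_l}$ and $d_{\alpha_u}$ must not break the argument. They do not, because the telescoping bound uses only boundedness of $\alpha_k$ and no exchangeability. I would address the lag by restricting attention to the one-step-ahead constraint ($\tau=0$), whose label is observed at the next step, so that $e_k$ is well defined.
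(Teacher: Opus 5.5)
Your proof is correct and is essentially the paper's argument. Both rest on the deterministic long-run miscoverage bound for the ACP recursion (Proposition 4.1 of \cite{gibbs2021adaptive}) and then take expectations using $\mathbb{E}[1-e_k]=\mathrm{Pr}(F^{*}\in R(\hat{F},S^{(r)}))$; the only difference is that you derive that bound yourself from $\alpha_k\geqslant-\eta$ and the telescoped update rather than citing it.

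Your one-sided identity $\frac{1}{K}\sum_{k=0}^{K-1}e_k-\alpha=\frac{\alpha_0-\alpha_K}{K\eta}\leqslant\frac{\alpha_0+\eta}{K\eta}$ is the correctly signed form. The paper's displayed version of the cited bound has its two sides swapped and a minus sign dropped; the true range is $[-\frac{1-\alpha_0+\eta}{K\eta},\frac{\alpha_0+\eta}{K\eta}]$. The paper reaches the right lower bound $1-\alpha-p_l$ only because a second sign slip in its rearrangement undoes the first.
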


\begin{proof}
    From proposition 4.1 \cite{gibbs2021adaptive}, we know that
\begin{equation}\label{eq: gibbs prop}
    \frac{\alpha_0 + \eta}{K\eta}\leqslant \frac{1}{K} \sum_{k=0}^{K-1} e_k - \alpha \leqslant \frac{1-\alpha_0 + \eta}{K\eta}
\end{equation}
Given the confidence interval $R(\hat{F}, S^{(r)})$ and the indicator $e_k$, the following probaility equation holds,
\begin{equation}\label{eq: expection of residual}
    \mathrm{Pr}(F^{*}\in R(\hat{F}, S^{(r)})) = \mathbb{E}(1-e_k)=1-\mathbb{E}(e_k)
\end{equation}

Then, taking expectation on both side of Eq.~\eqref{eq: gibbs prop}, we obtain,
\begin{equation}
    \frac{\alpha_0 + \eta}{K\eta}\leqslant \frac{1}{K} \sum_{k=0}^{K-1} \mathbb{E}(e_k) - \alpha \leqslant \frac{1-\alpha_0 + \eta}{K\eta}
\end{equation}
where $K, \eta, \alpha, \alpha_0$ are the constant and hence $\mathbb{E}(\square)=\square$. Substituting $\mathbb{E}(e_k)$ in Eq.\eqref{eq: expection of residual}, we obtain
\begin{equation}
    \frac{\alpha_0 + \eta}{K\eta}\leqslant \frac{1}{K} \sum_{k=0}^{K-1} (1-\mathrm{Pr}(F^{*}\in R(\cdot)) - \alpha \leqslant \frac{1-\alpha_0 + \eta}{K\eta}
\end{equation}
$R(\cdot)$ denotes $R(\hat{F}, S^{(r)})$. Then, we have
\begin{equation}
    1-\alpha-p_l\leqslant \frac{1}{K} \sum_{k=0}^{K-1} \mathrm{Pr}(F^{*}\in R(\hat{F}, S^{(r)})) \leqslant 1-\alpha-p_u
\end{equation}
where $p_l=(\alpha_0+\eta)/(K\eta)$ and $p_u=(1-\alpha_0+\eta)/(K\eta)$. Thus when $K\rightarrow \infty$, $p_l=0$ and $p_u=0$.
\end{proof}

\section{Simulation}\label{Sec: simulation}

\begin{figure}[t]	
	\centering	
	\includegraphics[scale=1, width=\linewidth]{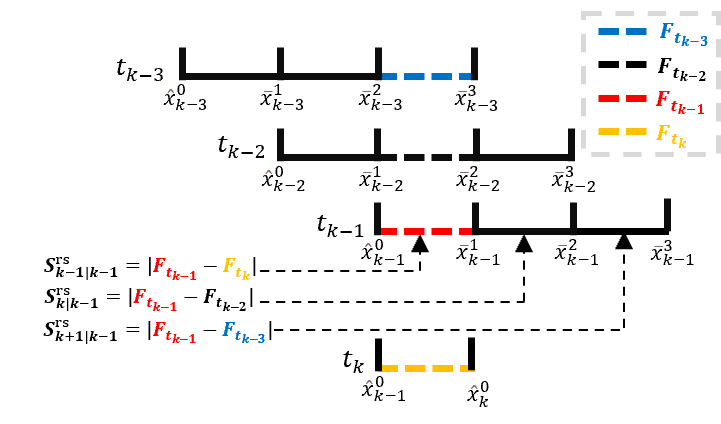}	
	\caption{Residual score computation with DT-CBF under MPC. For the current prediction horizon ($H=3$) starting at time step $t_{k-1}$, the time-lagged CBF residual is defined as $S^{\mathrm{rs}}_{k-1+\tau|k-1} = |F(\hat{x}_{k-1|k-1}, u_{k-1|k-1}) - F(\bar{x}_{k-1|k-1-\tau}, u_{k-1|k-1-\tau}) |$ for $\tau = 1, \dots, H-1$. For $\tau=0$, the residual reduces to $S^{rs}_{k-1|k-1} = |F_{t_{k-1}}-F_{t_k}|$. The resulting error terms ${S^{rs}_{k+\tau|k}}$ are then aggregated into the conformal score set $\mathcal{S}$.
 }
	\label{Fig: residual score function}
    \vspace{-0.5cm}
\end{figure}

This paper proposes an efficient method to quantify the uncertainty of the chance constraint under the DT-CBF within MPC. Using the ACP under the CQR score function, the confidence interval for the uncertainty-aware constraint can be self-tightening. The proposed framework is named as MPC-CBF-ACP-CQR (MCA-CQR).
To validate the effectiveness of MCA-CQR, we compare it with MPC-CBF (MC)~\cite{zeng2021safety}. For the ablation study of the CQR score function, we compare our method (MCA-CQR) with MPC-CBF-ACP using the residual score function, which is named MCA.  Given $F(\bar{x}_k, u_k)$ and $F^{*}(\hat{x}_k, u_k)$ defined in section \ref{sec: DT-CBF CQR}, the residual score function is defined as $S_k^{\mathrm{rs}}=|F(\bar{x}_k, u_k)-F^{*}(\hat{x}_k, u_k)|$ and the time-lagged conformal score computation is presented in Fig.\ref{Fig: residual score function}.
Extensive simulations are conducted to answer the following questions:
\begin{itemize}
    \item \textbf{Q1.} Can the proposed uncertainty-aware framework using the CQR score function guarantee safety within the desired probability?
    \item \textbf{Q2.} What is the advantage of using the CQR score function for downstream decision-making compared to the traditional residual score function?
    \item \textbf{Q3.} Can the proposed method guarantee safety under different forms of distributions, different robot platforms, and different robot control tasks?
\end{itemize}

To answer those questions, we conduct our simulation using a mobile robot and a quadrotor engaged in the navigation and path-tracking task. The distance between the robot and the obstacle is computed to evaluate the safety. To construct the safety constraint, a discrete CBF is applied with the distance between the robot and the obstacle, which is defined as, $h(x_k)=||x_k-x_{\mathrm{obs}}||^2-(r_{\mathrm{rob}}+r_{\mathrm{obs}})^2$ where $x_{\mathrm{obs}}$ is the position of the obstacle, $r_{\mathrm{rob}}$ is the radius of the robot, and $r_{\mathrm{obs}}$ is the radius of the obstacle.

\begin{table*}[h!]
\centering
\caption{Quantitative experiment results under different distributions and different robot platforms}
\label{table: quantitative experiment}
\begin{tabular}{|c|c|c|c|c|c|c|c|}
\hline
 &  & \multicolumn{3}{c|}{Mobile Robot} & \multicolumn{3}{c|}{Quadrotor}  \\ \cline{3-8}
 & Distribution & MC & MCA & MCA-CQR(ours) & MC & MCA & MCA-CQR(ours) \\ \hline
 & Uniform & 0 & 100 & 100 & 45 & 100 & 100 \\ \cline{2-8}
 \%Succ & Gaussian & 0 & 100 & 100 & 41 & 100 & 100 \\ \cline{2-8}
 & Time Varying & 0 & 100 & 100 & 40 & 100 & 100 \\ \hline
 & Uniform & -0.026 & 0.027 & 0.008 & -0.056 & 0.102 & 0.054 \\ \cline{2-8}
$\min h$ & Gaussian & -0.128 & 0.032 & 0.006 & -0.054 & 0.086 & 0.057 \\ \cline{2-8}
 & Time Varying & -0.088 & 0.028 & 0.012 & -0.048 & 0.104 & 0.042 \\ \hline
\end{tabular}

\begin{flushleft}
{\footnotesize 
Quantitative comparison of MC, MCA, and MCA-CQR.
\%Succ. denotes the proportion of experiments in which robots safely finish the corresponding tasks.
Safety is evaluated using the metric $h(x_k)\geqslant 0$; the robot is considered safe if this condition holds, and it is deemed to have collided with an obstacle otherwise. $\min h$ denotes the minimum value of $h(x_k)$ observed across all 100 experiments. Empirical results show that our method not only guarantees safety across different distributions but also approaches the obstacle because of the effective confidence interval provided by the CQR score function.
}
\end{flushleft}
\vspace{-0.5cm}
\end{table*}

\begin{figure*}[!t]	
	\centering
	\subfigure[Uniform distribution: U ]
	{
		\begin{minipage}[t]{0.33\linewidth}
			\centering
			\includegraphics[scale=1, width=\linewidth]{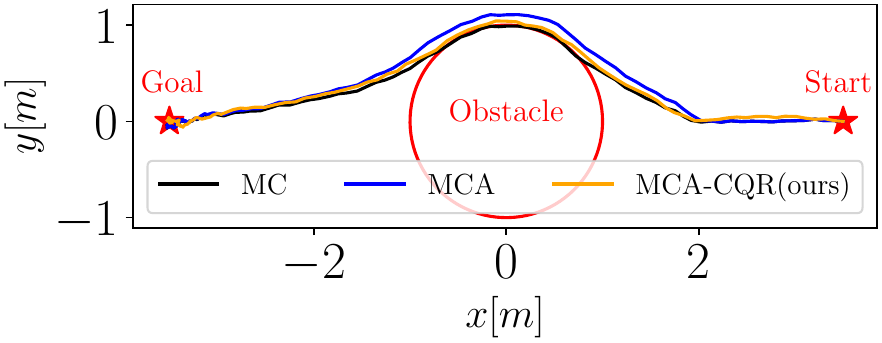}
		\end{minipage}	
	}
	\subfigure[Gaussian distribution: $\mathcal{N}$]
	{
		\begin{minipage}[t]{0.3\linewidth}
			\centering
			\includegraphics[scale=1, width=\linewidth]{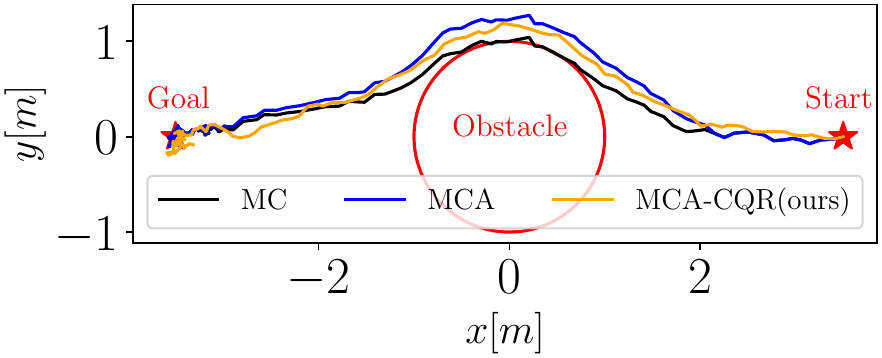}
		\end{minipage}	
	}
    \subfigure[Time varying distribution]
	{
		\begin{minipage}[t]{0.3\linewidth}
			\centering
			\includegraphics[scale=1, width=\linewidth]{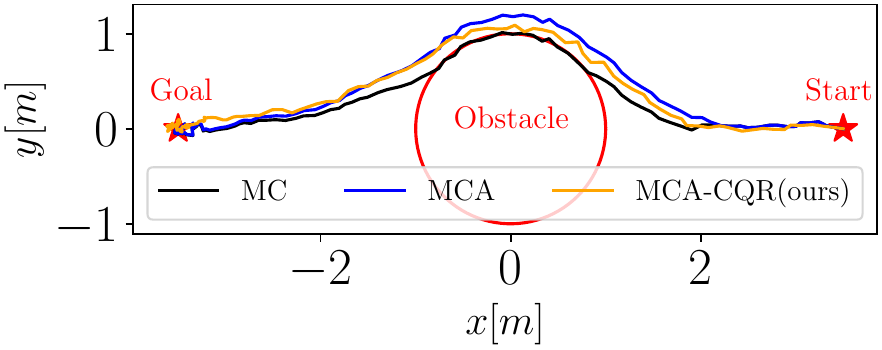}
		\end{minipage}	
	}
	\caption{Simulations for a mobile robot navigating in a single obstacle scenario under different noise distributions with different methods. $\gamma=0.9$ and $\alpha=0.05$ are adopted. Our proposed framework (MCA-CQR) not only achieves safe navigation across different motion noise distributions but also provides a smaller safety margin than the MCA.
    } 
	\label{Fig: mobile robot with distributions}
\end{figure*}

\begin{figure*}[!t]	
	\centering
	\subfigure[Uniform distribution: U ]
	{
		\begin{minipage}[t]{0.3\linewidth}
			\centering
			\includegraphics[scale=1, width=\linewidth]{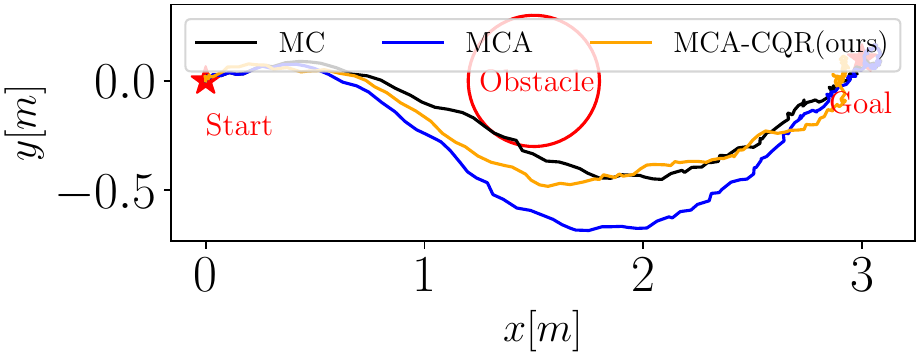}
		\end{minipage}	
	}
	\subfigure[Gaussian distribution: $\mathcal{N}$]
	{
		\begin{minipage}[t]{0.33\linewidth}
			\centering
			\includegraphics[scale=1, width=\linewidth]{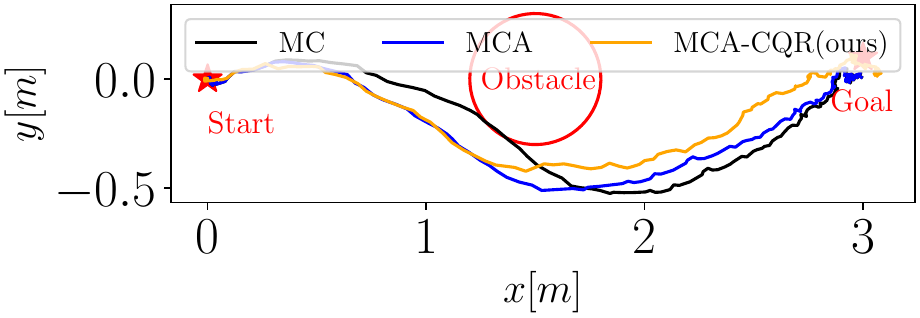}
		\end{minipage}	
	}
    \subfigure[Time varying distribution]
	{
		\begin{minipage}[t]{0.31\linewidth}
			\centering
			\includegraphics[scale=1, width=\linewidth]{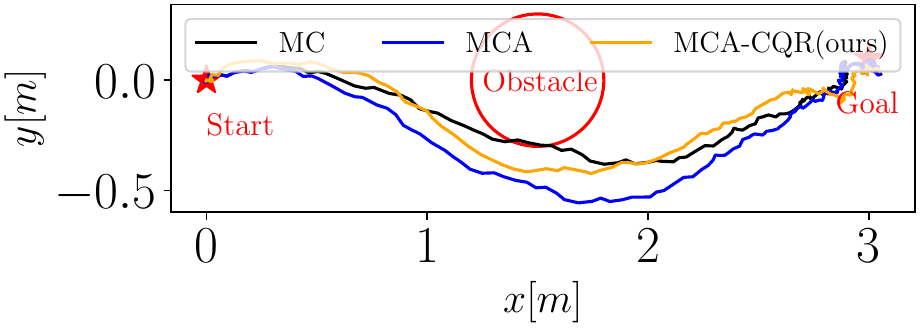}
		\end{minipage}	
	}
	\caption{Simulations for a planar quadrotor navigating in a single obstacle scenario under variant noise distributions with different methods. $\gamma=0.9$ and $\alpha=0.05$ are adopted. Our proposed framework (MCA-CQR) not only achieves safe navigation across different motion noise distributions but also provides a smaller safety margin than the MCA.
    } 
	\label{Fig: quadrotor with different distributions}
    \vspace{-0.5cm}
\end{figure*}

\begin{figure}[t]	
	\centering	
	\includegraphics[scale=1, width=1.0\linewidth]{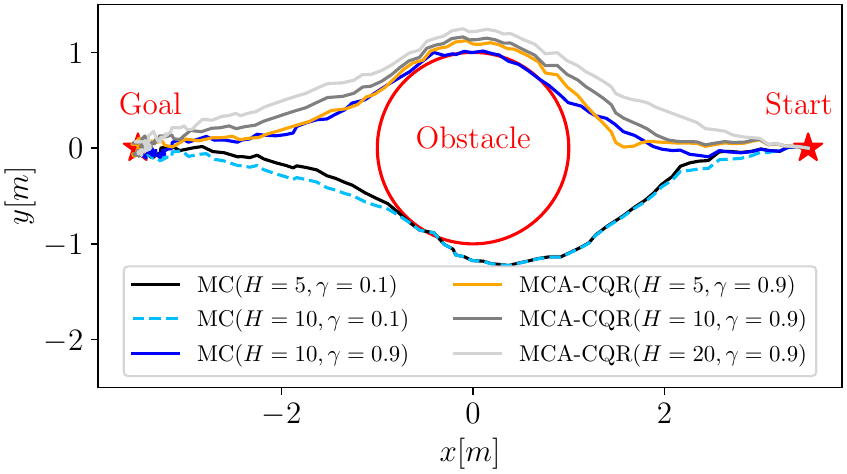}	
	\caption{Parameter analysis with MC\cite{zeng2021safety} and MCA-CQR(ours) under different MPC horizon $H$ and the $\gamma$ in CBF. This is conducted under the mobile robot platform. MC collides with the obstacle across different horizons, even with a small $\gamma$, whereas our method can achieve a collision-free path. In this simulation, we apply the Gaussian distribution or the uniform distribution randomly to $\epsilon_k$ at every step $k$.}
	\label{Fig: mobile parameter}
    \vspace{-0.5cm}
\end{figure}

\begin{figure}[t]	
	\centering	
	\includegraphics[scale=1, width=0.8\linewidth]{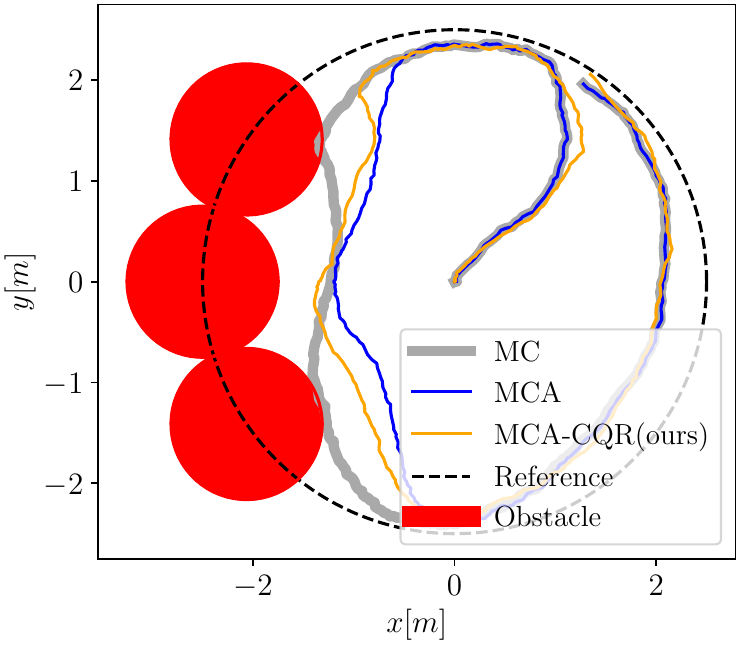}	
	\caption{Simulation for a planar quadrotor to track a circle while avoiding the obstacle. MC, MCA, and MCA-CQR are all applied. Results show that MC collides with the obstacle, whereas the uncertainty-aware method proposed in this paper (MCA-CQR) not only guarantees safety but also achieves a smaller safety margin than MCA under the residual score function.}
	\label{Fig: quadrotor circle tracking}
    \vspace{-0.5cm}
\end{figure}

\subsection{Mobile Robot with Single-integrator Dynamics}
We consider the mobile robot modeled by discrete-time single-integrator dynamics:
\begin{equation} \label{eq: dynamics simulation}
    \hat{x}_{k+1} = \hat{x}_k + u_k\Delta t + \epsilon_k
\end{equation}
where $\epsilon_k \in \mathbb{R}^2$ is the motion uncertainty and $\Delta t=0.02$. Note that we do not impose any parametric distributional assumption on $\epsilon_k$, which means that it might be non-Gaussian or even time-varying.Each component of the control input $u_k$ is bounded within $[-5, 5]$.

\textbf{Simulation parameter.} For the MPC parameter, we consider
the Linear Quadratic Regulator (LQR) cost objective where $Q=10*\mathrm{\textbf{I}}_{2}$ and $R=\mathrm{\textbf{I}}_{2}$. 
Unless otherwise specified, the MPC horizon $H=10$ and $\gamma=0.9$ are adopted for all the mobile robot simulation experiments. The failure probability for ACP is set as $\alpha=0.05$. For visualization, $r_{\mathrm{rob}}=0$ is adopted, while a circular obstacle with radius $r_{\mathrm{obs}}=1$ is located at the origin. The corresponding start and goal points are shown in Fig. \ref{Fig: mobile robot with distributions}.

\textbf{Effect of $H$ and $\gamma$}. 
To analyze the influence of the hyperparameter on the performance of the proposed MCA-CQR, we compare MCA-CQR (our method) with MC baseline under different settings of $H$ and $\gamma$. As shown in Fig.~\ref{Fig: mobile parameter}, even with a smaller $\gamma$ ($\gamma=0.1$), which leads to a more conservative safety constraint and would therefore be expected to reduce the chance of collision, such conservatism alone is insufficient to ensure safety under stochastic dynamics. In contrast, our MCA-CQR can safely navigate to the goal in this stochastic setting regardless of the value of $H$ and $\gamma$.

\textbf{Distribution-free validation.} To answer Question 3 under distribution-free settings, three types of $\epsilon_k$ in Eq.~\eqref{eq: dynamics simulation} are applied: (i) Gaussian noise $\epsilon_k \sim 0.02*\mathcal{N}(\mathbf{0}_2,\mathbf{I}_{2})$, (ii) uniformly distributed noise $\epsilon_k \sim U(-0.02, 0.02)$ with each dimension of the robot state, and (iii) a time-varying distribution where $\epsilon_k$ is randomly sampled from either (i) or (ii) is applied randomly to each time step. This setup is used to test whether the proposed method can maintain the collision avoidance when the noise distribution is unknown and changes over time.

We compare our method (MCA-CQR) against our method with the residual score function (MCA) and  MPC-CBF (MC) \cite{zeng2021safety}. The simulation results in Fig.~\ref{Fig: mobile robot with distributions} show that our proposed method (MCA-CQR) can safely navigate to the goals, whereas MC cannot when facing uncertainty. Furthermore, compared with MCA, the trajectories produced by our MCA-CQR remains closer to the obstacle, suggesting that the confidence interval induced by the CQR score function is less conservative. 

For quantitative evaluation, we run 100 experiments with different random seeds and report the safety rate, and the minimum distance across all experiments in Table \ref{table: quantitative experiment}.

\subsection{Quadrotor}
To further validate the performance of the proposed method on different platforms and control tasks, we additionally consider the quadrotor system for testing both \textbf{navigation} and \textbf{path-tracking} tasks while avoiding collisions with obstacles. 
The planar quadrotor dynamics is defined as,
\begin{equation}
    \begin{aligned}
        m\ddot{x}&=-T\sin\theta \\
        m\ddot{y}&=mT\cos\theta -mg \\
        I\ddot{\theta}&=\tau
    \end{aligned}
\end{equation}
where the control input is thrust $T$ and torque $\tau$. The mass $m=1.0$ and the inertial parameter $I=0.011$ are adopted.

\textbf{Simulation parameter.}
Similar to the mobile robot navigation example, the MPC cost is formulated as the LQR cost objective with $Q=\mathrm{diag}(20,20,1,2,2,0.1)$ and $R=\mathrm{diag}(0.1, 5)$.
Unless otherwise specified, the MPC horizon $H=10$ and $\gamma=0.9$ are adopted for all experiments under the quadrotor simulation. The failure probability for ACP is set to $\alpha=0.05$.

\subsubsection{Navigation} The control task for the quadrotor is to move from the start position to the goal position while avoiding the obstacle. The start point, goal point, and the obstacle location are shown in Fig. \ref{Fig: quadrotor with different distributions}.

\begin{figure}[!t]	
	\centering
	\subfigure[The trajectory of quadrotor under our method ]
	{
		\begin{minipage}[t]{0.8\linewidth}
			\centering
			\includegraphics[scale=1, width=\linewidth]{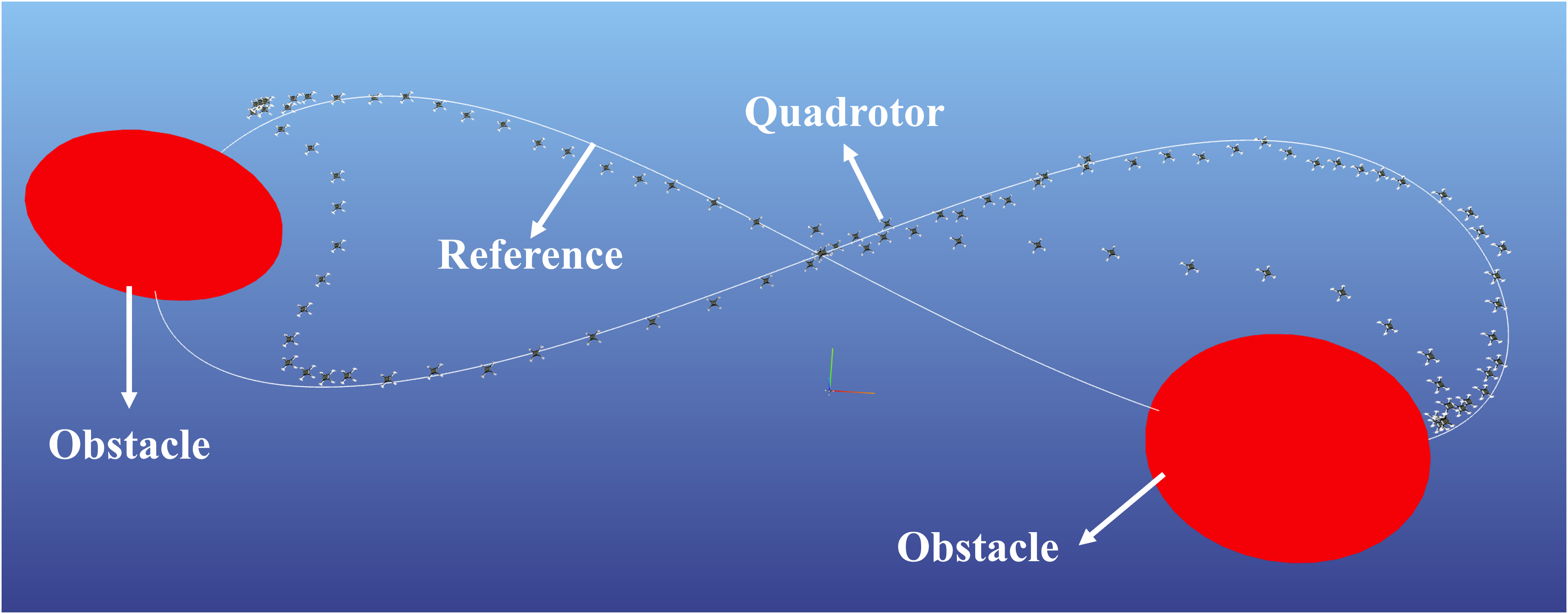}
		\end{minipage}	
	}
	\subfigure[The trajectory under different methods]
	{
		\begin{minipage}[t]{0.9\linewidth}
			\centering
			\includegraphics[scale=1, width=\linewidth]{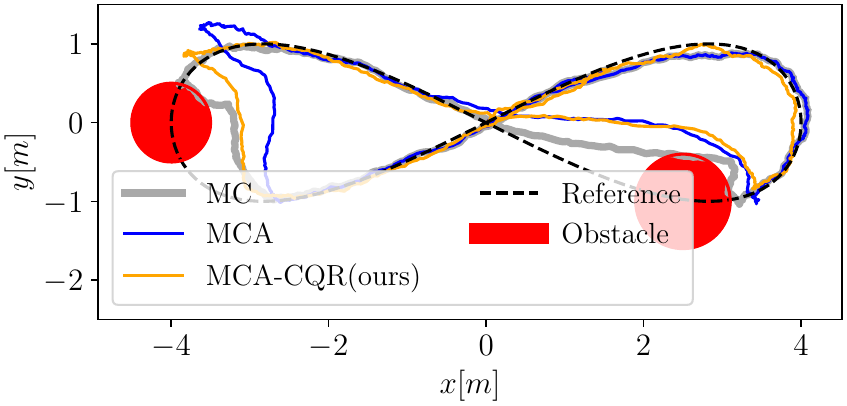}
		\end{minipage}	
	}
    
	\caption{Simulations for a quadrotor to track a figure eight while avoiding the obstacle under the time-varying motion noise. $\gamma=0.9$ and $\alpha=0.05$ are adopted. Our proposed framework (MCA-CQR) not only guarantees safety but also provides a small safety margin with respect to the MCA under the residual score function.} 
	\label{Fig: 3D quadrotor tracking}
\end{figure}

\textbf{Distribution-free validation.}
Similar to the mobile robot navigation task,
three types of $\epsilon_k$ in Eq.~\eqref{eq: dynamics simulation} are applied: (i) Gaussian noise $\epsilon_k \sim \mathcal{N}(\mathbf{0}_6,\mathbf{diag}(0.0005, 0.0005, 0.0001, 0.0005, 0.0005, 0.0001))$, (ii) uniformly distributed noise $\epsilon_k \sim U(-0.02, 0.02)$ with each dimension, and (iii) time varying distribution where the noise disturbance at each step is randomly sampled from either (i) or (ii).

We compare our method (MCA-CQR) against our method with the residual score function (MCA) and MPC-CBF (MC) \cite{zeng2021safety}. Simulation results in Fig. \ref{Fig: quadrotor with different distributions} show that our proposed method can guarantee safety, whereas MC cannot when facing uncertainty. On the other hand, benefiting from the effective confidence interval, the trajectory generated by our method (MCA-CQR) is less conservative than MCA. For quantitative evaluation, we also run 100
experiments with different random seeds and report the safety
rate, and the minimum distance across all these 100 episodes in Table \ref{table: quantitative experiment}.

\subsubsection{Tracking}
We continue to validate the performance of the proposed method in trajectory tracking tasks. Specifically, the planar quadrotor is tasked with tracking a circle while avoiding collision with the obstacles. The MPC horizon is set to $H=25$. As depicted in Fig.~\ref{Fig: quadrotor circle tracking}, the quadrotor does not collide with the obstacle when tracking the circular trajectory.

We then extend it to a 3D quadrotor tracking scenario, where it is required to follow a figure-eight trajectory while avoiding obstacles. Using the same MPC prediction horizon $H=25$, our method remains effective in more complex tracking tasks, achieving safe trajectory tracking in the presence of obstacles. The results are demonstrated in Fig.~\ref{Fig: 3D quadrotor tracking}.

\section{Conclusion}\label{Sec: conclusion}
This paper presented an uncertainty-aware safety-critical control framework that combines adaptive conformal quantile prediction with control barrier function–based safety enforcement. By constructing a calibrated, state-dependent uncertainty interval, the proposed approach enables high-probability safety guarantees while reducing conservatism in control. The integration with model predictive control further supports efficient and reliable decision-making under uncertainty. Theoretical analysis is provided to justify the performance guarantees of the proposed method, and simulation results demonstrate the approach's effectiveness on simulated mobile robot and quadrotor platforms.

\bibliographystyle{IEEEtran}
\bibliography{IEEEabrv,ref}

\end{document}